%% file: content.tex
\documentclass[sigconf]{acmart}

\newtheorem{assumption}{Assumption}
\usepackage{subcaption}
\usepackage{multicol}
\usepackage{multirow}
\usepackage{adjustbox}
\usepackage{rotating}

\definecolor{firstcolor}{HTML}{0000ff}
\definecolor{secondcolor}{HTML}{cb0707}
\newcommand\colorup[1]{\textcolor{firstcolor}{\textbf{#1}}}
\newcommand\colordown[1]{\textcolor{secondcolor}{\textbf{#1}}}

\newcommand\revised[1]{\textcolor{blue}{}}
\AtBeginDocument{%
  }

\setcopyright{acmlicensed}
\copyrightyear{2026}
\acmYear{2026}
\acmDOI{XXXXXXX.XXXXXXX}
\acmConference[CIKM '26]{Conference on Knowledge and Information Management}{November 07--11,
  2026}{Rome, Italy}
\acmISBN{978-1-4503-XXXX-X/2018/06}

\begin{document}

\title{Local Message-Passing for Discrete Graph Generation}

\author{Jay Revolinsky}
\email{revolins@msu.edu}
\orcid{1234-5678-9012}
\affiliation{%
  \institution{Michigan State University}
  \city{East Lansing}
  \state{Michigan}
  \country{USA}
}

\author{Harry Shomer}
\affiliation{%
  \institution{University of Texas - Arlington}
  \city{Arlington}
  \state{Texas}
  \country{USA}
}

\author{Jiliang Tang}
\affiliation{%
  \institution{Michigan State University}
  \city{East Lansing}
  \state{Michigan}
  \country{USA}
}

\renewcommand{\shortauthors}{Revolinsky et al.}

\begin{abstract}
    Discrete graph generation has emerged as a powerful paradigm for modeling graph-structured data, yet state-of-the-art models often rely on Graph Transformers or higher-order architectures. We revisit this design assumption by introducing GenGNN, a modular message-passing backbone for graph generation. GenGNN enables powerful generation by persisting edge fields through latent refinement of coupled node–edge–graph states, all without requiring global attention. Diffusion models integrating GenGNN achieve over 90\% validity on standard benchmark datasets, performing within margins of Graph Transformer backbones and achieving 2–5x faster inference. Systematic ablations isolate how GenGNN is resilient to oversmoothing during generative (de)noising, indicating each GenGNN component is necessary for downstream generation quality. Finally, representation-space analysis suggests GenGNN learns functionally-similar representations to more theoretically-expressive architectures; even at deeper layers. As such, GenGNN uplifts local message-passing to challenge prevailing assumptions that performant discrete graph generation requires global attention or higher-order representations. Source Code: \href{https://github.com/revolins/GenGNN_CIKM}{Available Here}
\end{abstract}

\begin{CCSXML}
<ccs2012>
   <concept>
       <concept_id>10010147.10010257.10010293.10010294</concept_id>
       <concept_desc>Computing methodologies~Neural networks</concept_desc>
       <concept_significance>500</concept_significance>
       </concept>
   <concept>
       <concept_id>10003752.10003809.10003635</concept_id>
       <concept_desc>Theory of computation~Graph algorithms analysis</concept_desc>
       <concept_significance>300</concept_significance>
       </concept>
 </ccs2012>
\end{CCSXML}

\ccsdesc[500]{Computing methodologies~Neural networks}
\ccsdesc[300]{Theory of computation~Graph algorithms analysis}

\keywords{Graph Generation, Message-Passing Neural Networks, Graph Neural Networks, Discrete Flow-Matching, Discrete Diffusion}

\received{23 May 2026}
\received[revised]{13 August 2026}
\received[accepted]{20 August 2026}

\maketitle

\input{sections/intro}
\input{sections/background}

\input{sections/gengnn}
\input{sections/results}

\input{sections/conclusion}

\clearpage
\bibliographystyle{ACM-Reference-Format}
\bibliography{references}


\end{document}

%% file: sections/intro.tex
\section{Introduction}

Graph generation is a rapidly growing subfield of graph representation learning, with applications to drug discovery~\cite{minkai2023geoldm} and code modeling~\cite{brockschmidt2018generative}. Recent graph generative models (GGMs) based on discrete diffusion achieve strong downstream performance by iteratively denoising discrete node and edge states~\cite{haefeli2022diffusion, vignac2022digress}. More recent work further integrates continuous-time flow modeling into discrete graph generation~\cite{siraudin2024cometh, xu2024discrete}, resulting in strong performance on standard benchmark datasets~\cite{qin2024defog}. However, this performance comes with a substantial inference cost, as discrete graph generation typically requires repeated denoising steps over dense node--edge representations which are quadratic in complexity~\cite{vignac2022digress, qin2024defog}. 

\begin{figure}[t!]
    \centering
    \begin{subfigure}[t]{0.22\textwidth}
        \centering
        \includegraphics[height=1in]{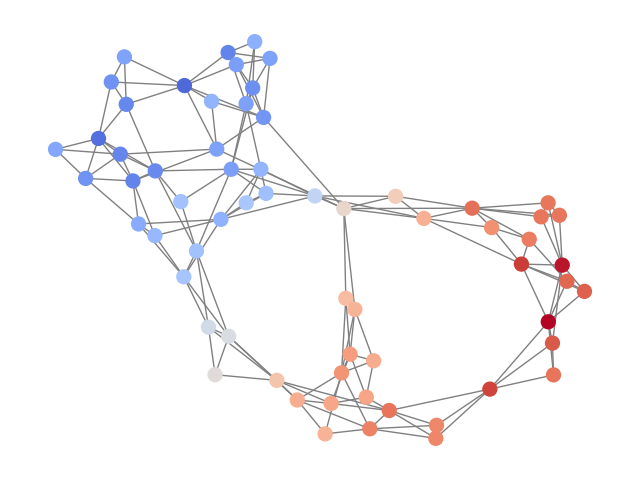}
        \caption{Simple GNN}
        \label{fig:degen_planar}
    \end{subfigure}
    \begin{subfigure}[t]{0.22\textwidth}
        \centering
        \includegraphics[height=1in]{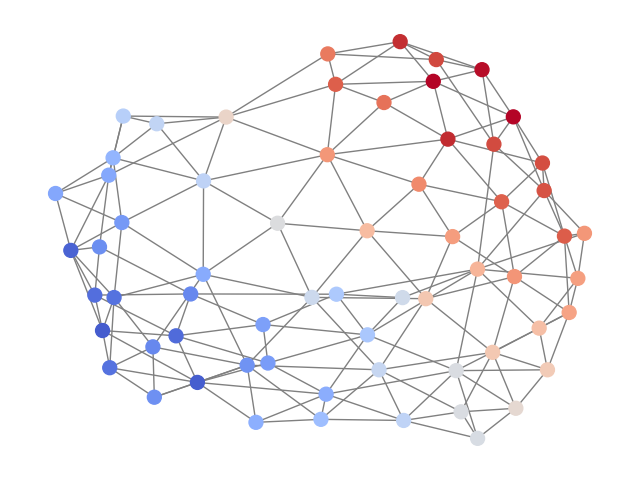}
        \caption{Graph Transformer}
        \label{fig:good_planar}
    \end{subfigure}%
    \caption{Planar graphs generated via DeFoG with either a simple GNN or Graph Transformer backbone. The simple GNN fails to reproduce finer substructures in its output.}
    \label{fig:toy_planar}
\end{figure}

Existing work has primarily addressed this cost by improving the generative process itself. This includes: reducing the number of denoising steps~\cite{geng2025improved}, improving transition modeling~\cite{song2020denoising}, or sparsifying reconstruction~\cite{qin2023sparse}. In contrast, a complementary bottleneck remains underexplored: {\bf the complexity of the denoising backbone used at each generative step}. Existing GGMs~\cite{vignac2022digress, qin2024defog} rely on computationally expensive~\cite{muller2024attending} and specialized denoising backbones including variants of the transformer architecture~\cite{vaswani2017attention, rampavsek2022recipe, muller2024attending} and maximally expressive graph networks~\cite{maron2019provably, morris2019weisfeiler}. Therefore, even when the number of denoising steps is reduced, the high complexity of the denoising model reduces the potential speedup. This is visualized in Figure~\ref{fig:sampling_diagram}, where each sampling step incurs a function call ($f_\theta$) to the denoising model; the cost of which scales with model complexity.

As opposed to standard GNNs which rely on local message passing~\cite{kipf2016semi},  expressive GGM denoising models (e.g., graph transformers) are non-local in nature, allowing for the to encode more complicated graph structures. The use of these models follow a natural two-part intuition. \textbf{First}, standard graph neural networks (GNNs) are limited in their ability to distinguish certain graph substructures~\cite{xu2018powerful}. \textbf{Second}, deeper GNNs can suffer from over-smoothing, particularly when many layers are required to propagate structural information across denoising timesteps~\cite{arnaiz2025oversmoothing}. Together, these limitations suggest that local GNNs may fail to accurately sample graph structure during iterative generation. Figure~\ref{fig:toy_planar} illustrates this intuition with a simple example: a standard GNN backbone fails to fully recover a Planar graph, while a Graph Transformer successfully samples the same input structure.

Evidence from node- and graph-classification tasks suggests a more nuanced story. Recent work shows carefully modified GNNs achieve performance comparable to Graph Transformers (GTs) on several graph learning tasks~\cite{liao2024greener, luo2024classic, luo2025can}. This is important because local GNNs are typically more efficient than architectures based on tuple representations~\cite{maron2019provably, morris2019weisfeiler} or global attention~\cite{rampavsek2022recipe, muller2024attending}, enabling possible inference speedups~\cite{luo2025can}. Unlike classification, graph generation poses a distinct challenge: a denoising backbone must repeatedly predict transitions over nodes, edges, and graphs. Thus, the success of efficient GNNs in classification does not immediately imply local message-passing can serve a viable graph-generation backbone. This motivates a central question: \textit{can efficient local message-passing serve as an effective denoising backbone for discrete graph generation?}

To challenge the use of highly expressive denoising models, we introduce \textsc{GenGNN}, an efficient generation-oriented message-passing backbone for discrete graph generation. Rather than treating local message-passing as a generic replacement for Graph Transformers, \textsc{GenGNN} intentionally restricts the denoising model to local updates for testing if limitations of standard GNNs can be mitigated through targeted design. \textsc{GenGNN} jointly evolves node, edge, and graph-level latent states through a modular message-passing scheme. This scheme combines edge-conditioned updates, residualized feed-forward transformations, normalization, and graph-level conditioning into a recurrent loop. This design targets the central difficulty of graph generation for message-passing: maintaining coherent structural information across repeated denoising steps without relying on global attention or higher-order tensor representations.

Empirically, \textsc{GenGNN} can achieve competitive performance with state-of-the-art GGM encoders while also {\bf improving efficiency by} $\mathbf{1.7}$--$\mathbf{5\times}$ on standard graph-generation benchmarks, including Tree, Planar, SBM, ZINC, QM9, GuacaMol, and MOSES~\cite{bergmeister2023efficient, martinkus2022spectre, sterling2015zinc, wu2018moleculenet, polykovskiy2020molecular, brown2019guacamol}. Our theoretical analysis further indicates that residual connections help ground latent \textsc{GenGNN} representations across denoising timesteps, mitigating over-smoothing even when the input graph is highly noised. Ablation studies show that each \textsc{GenGNN} component contributes to downstream generation quality, while depth-scaling analysis demonstrates that \textsc{GenGNN} remains resilient to over-smoothing and learns latent representations~\cite{limbeck2024metric} comparable to more expressive architectures. These findings suggest that performant discrete graph generation does not necessarily require expensive non-local denoising backbones; instead, carefully structured local message-passing can provide a practical efficiency--performance tradeoff.
The contribution of our work is as follows:
\begin{itemize}
  \item We introduce GenGNN, a local message-passing framework for strong graph generation with 1.7-5x relative inference speedup.
  \item We detail a theoretical underpinning on how GenGNN overcomes oversmoothing for effective discrete graph diffusion.
  \item We conduct detailed ablation and layer-depth analysis to demonstrate GenGNN's connection between downstream performance and latent representations.
\end{itemize}

%% file: sections/background.tex
\section{Background}

\subsection{Preliminary}

Within this work, we represent a graph $G = \{\mathbf{X}, \mathbf{E}\}$ by node features
$\mathbf{X} \in \mathbb{R}^{N \times d_x}$,
edge features
$\mathbf{E} \in \mathbb{R}^{N \times N \times d_e}$,
and a global conditioning vector with a timestep embedding as
$\mathbf{y} \in \mathbb{R}^{d_y}$. The adjacency matrix for a target graph is illustrated by ${A}^{N \times N}$, where a point of interest is shown by $a_{ij}$.

We denote the hidden representations at layer $\ell$ by
$\mathbf{X}^{(\ell)}$, $\mathbf{E}^{(\ell)}$, and $\mathbf{y}^{(\ell)}$. A continuous flow-matching rate matrix is represented by 
${R}^{N \times N }$. Discrete probabilities computed across an adjacency matrix are shown as ${Q}^{N \times N}$ or $q(.)$. The transition matrix is  detailed with $M^{N \times N}$, which is distinct from a message ($\mathcal{M}_{\theta}$) or $\mathbf{m}_{i}$.

\begin{figure*}[t!h]
  \begin{center}
    \centerline{\includegraphics[width=0.9\textwidth]{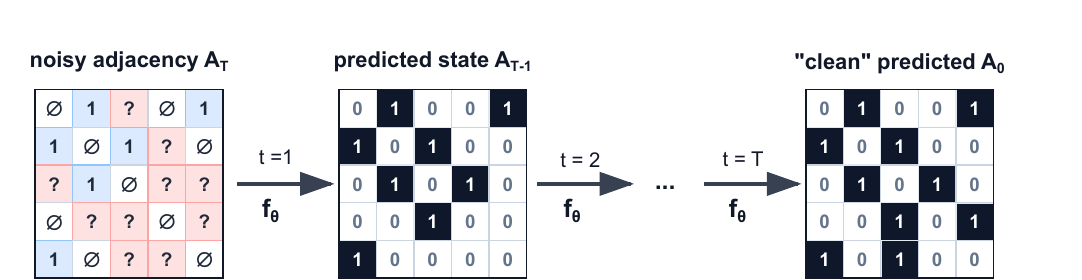}}
    \caption{
       The discrete denoising process for sampling a "clean" output adjacency matrix, based on a noisy categorical input distribution. This work is centered around $f_{\theta}$'s complexity, with the intent to balance local message-passing expressivity and sampling efficiency.
    }
    \label{fig:sampling_diagram}
  \end{center}
\end{figure*}


\subsection{Discrete Graph Diffusion}

Discrete-state diffusion transitions between (de)noised states via marginal probabilities on categorical distributions. Contrary to continuous diffusion~\cite{sohl2015deep}, discrete diffusion adding noise to form $G_t = (X_t, E_t)$, where $t$ is the $t$-th diffusion timestep. This amounts to sampling each node and edge type from the categorical distribution~\cite{vignac2022digress}:
\begin{align}
\begin{split}
q(M_t \mid M_{t-1}) &= \bigl(X_{t-1} M_t^{X}, \; E_{t-1} M_t^{E}\bigr),\\
q(M_t \mid M) &= \bigl(X \bar{M}_t^{X}, \; E \bar{M}_t^{E}\bigr),
\label{eq:forward_noising}
\end{split}
\end{align}
where $\bar{M}$ is the transition matrix for calculating the diffusion process.
In practice, Eq.~\eqref{eq:forward_noising} enables a predictive encoder to learn a discrete diffusion process for strong graph generation performance. Recent work integrated continuous flow-modeling into the discrete-space, enabling multimodal learning on discrete sequences~\cite{campbell2024generative}. In order for flow-modeling to operate directly on graph structure, DeFoG~\cite{qin2024defog} computes transition dynamics from a rate matrix as follows:  
\begin{equation}
R_t^{(n)}\!\left(x_t^{(n)}, x_{t+\Delta t}^{(n)}\right)
= \\
\mathbb{E}_{p_{1 \mid t}^{(n)}\!\left(x_1^{(n)} \mid G_t\right)}
\Bigl[
R_t^{(n)}\!\left(x_t^{(n)}, x_{t+\Delta t}^{(n)} \mid x_1^{(n)}\right)
\Bigr],
\label{eq:node_rate}
\end{equation}
and analogously for the edge transitions, $\tilde{p}_{t+\Delta t \mid t}^{(ij)}\!\left(e_{t+\Delta t}^{(ij)} \mid G_t\right)$ where $\tilde{p}(\cdot)$ represent the probability of an edge existing between nodes $i$ and $j$ at time $t + \Delta t$. The rate matrix computation allows a decoupling of discrete denoising, demonstrating state-of-the-art performance with transformer encoders. Although specialized message-passing demonstrate impressive capability for protein sequence modeling~\cite{dauparas2022robust}, it remains untested for structured learning in the graph generation task.

\subsection{Message-Passing for Graph Generation}

Local message-passing operates exclusively on nodes within a given adjacency matrix~\cite{kipf2016semi}, fundamentally constraining graph generation to node-transition states. To bypass this requirement, DiGress~\cite{vignac2022digress} utilized a graph transformer with node-to-edge self-attention and graph-conditioning via FiLM~\cite{perez2018film}. 

So, \textit{effective} local message-passing~\cite{gilmer2017neural} as a backbone for graph generation remains a unique challenge. An unaltered denoising backbone must learn transitions for extrapolating nodes ($X$) to edge ($E$) and graph ($y$) transitions between subsequent layers ($l$):  

\begin{equation}
X^{\ell}
\rightarrow
E^\ell, y^\ell
\rightarrow
X^{\ell+1}
\rightarrow
E^{\ell+1}, y^{\ell+1}
\label{eq:layer_loop}
\end{equation}

As shown in Figure~\ref{fig:degen_planar}, standard message-passing trained via this sequential process remains insufficient for effective graph generation~\cite{bergmeister2023efficient, xu2024discrete}. 

Given this reduced performance, graph diffusion models typically rely on encoders capturing long-range interactions or fine-grained structural information. This includes graph transformer~\cite{vignac2022digress} or PPGN backbones~\cite{haefeli2022diffusion}. These methods are typically endowed with additional feature encodings to enhance the discriminative power of GGMs. SPECTRE inputs eigenvectors as additional features, preventing mode collapse in adversarial generation~\cite{martinkus2022spectre}. COMETH and DeFoG append RRWP encodings~\cite{ma2023graph} within their frameworks, approximating $p$-cycles counts for fine-grained generation~\cite{siraudin2024cometh, qin2024defog}. Separate investigations into graph diffusion models demonstrate $>$2-WL expressive GNNs enhance graph substructure generation~\cite{wang2025graph}. Therefore, current literature suggests maximal-expressivity is all but required for performant graph generation.


\paragraph{Oversmoothing in GNNs} A common concern with vanilla GNNs is their over-smoothing of node signals when propagating distant information~\cite{alon2020bottleneck}. Whether this phenomenon translates to discrete graph diffusion remains crucially untested, making it infeasible to determine if GNNs are capable diffusion backbones. Indeed, foundational work~\cite{li2019deepgcns} indicates added residual connections allows node-classification with deep graph convolutional networks (GCNs)~\cite{kipf2016semi}, especially for learning on long-range temporal information. Recent theoretical work demonstrates that including normalization and residual connections `anchor' initial feature signals within GNNs, enabling retention of relevant signals in deep layers~\cite{scholkemper2024residual}. We pull inspiration from these foundational works in the next section.

\subsection{Evaluation}

Foundational work testing GGM evaluation metrics~\cite{thompson2022evaluation} revealed that maximum mean discrepancy (MMD)~\cite{gretton2012kernel} effectively measures whether GGMs successfully sample local structures (i.e. orbits, clustering)~\cite{hovcevar2014combinatorial}. However, MMD requires tuning to measure all potential modes within sampling distributions. To remedy this, \cite{limbeck2024metric} propose MagDiff~\cite{leinster2013magnitude} which solves the integral between the difference in area between sampled and reference embeddings: \begin{equation}
\label{eq:magdiff}
\mathrm{MagDiff}
\;:=\;
\int_{t_0}^{t_{\mathrm{cut}}}
\big(
\mathrm{Mag}_X(t) - \mathrm{Mag}_Y(t)
\big)\, \mathrm{d}t,
\end{equation}

As such, MagDiff gauges the diversity of downstream graphs samples, enabling determination of whether a GGM learns to re-create it's training distribution without needing to tune a metric.

\paragraph{Comparison with Vanilla GNNs} For initial verification, we compare GenGNN to vanilla GNNs (i.e. GNNs with no enhancements discussed in Section~\ref{sec:GenGNN} ). This includes GCN~\cite{kipf2016semi}, GIN~\cite{xu2018powerful}, and MPNN~\cite{gilmer2017neural}. The comparison is done on the QM9 and ZINC datasets for the DeFoG generative model. We observe a sharp decrease in performance, showing that vanilla GNNs are not enough, and GenGNN's added components are necessary to boost the performance of standard GNNs.

\begin{table}[h!]
\centering
\small
\begin{adjustbox}{width=0.95\columnwidth}
\begin{tabular}{l|cc|cc}
\toprule
\multirow{2}{*}{\textbf{Model}} & \multicolumn{2}{c|}{\textbf{QM9}} & \multicolumn{2}{c}{\textbf{ZINC}} \\
\cmidrule(lr){2-3} \cmidrule(lr){4-5}
& Val $\uparrow$ & Uniq $\uparrow$ & Val $\uparrow$ & Uniq $\uparrow$\\
\midrule

MPNN/GNN~\cite{gilmer2017neural}
& 20.51 $\pm$ 0.16 & 66.2 $\pm$ 3.45
& 31.1 $\pm$ 0.3 & 99.98 $\pm$ 0.02 \\

GCN~\cite{kipf2016semi}
& 18.19 $\pm$ 0.38 & 77.11 $\pm$ 0.54
& 2.15 $\pm$ 0.1 & 83.05 $\pm$ 0.8  \\

GIN~\cite{xu2018powerful}
& 39.44 $\pm$ 0.32 & 88.98 $\pm$ 0.57
& 82.84 $\pm$ 0.2 & 99.61 $\pm$ 0.08  \\

\midrule 
GenGNN & 99.26 $\pm$ 0.06 & 96.38 $\pm$ 0.01 & 95.55 $\pm$ 0.79 & 99.95 $\pm$ 0.14 \\

\bottomrule
\end{tabular}
\end{adjustbox}
\caption{Comparison of basic GNNs and GenGNN for DeFoG on QM9 and ZINC.}
\label{tab:compare_gnns}
\end{table}

%% file: sections/gengnn.tex
\section{GenGNN: A Simple Message Passing Model for Graph Generation} \label{sec:GenGNN}



Current encoder backbones for GGMs~\cite{vignac2022digress, qin2024defog} tend to use powerful but expensive model designs which hamper efficiency. On the other hand, recent work shows simpler models achieve first-class performance in traditional node~\cite{luo2024classic} and graph classification~\cite{luo2025can, luo2024classic}. We are therefore interested in investigating whether {\it similar graph generation performance can be achieved via simpler GNN models}? 

We propose a simple GNN design to test whether best practice from other tasks~\cite{luo2025can, luo2024classic} transfer to graph generation, all while maintaining strong efficiency for reducing backbone model complexity during generative sampling. Therefore, reducing resource strain during sampling steps (Figure~\ref{fig:sampling_diagram}) and ensuring the best-possible model performance. Our framework, Generative Graph Neural Network (GenGNN), integrates message-passing neural networks into discrete diffusion models as effective generative backbones. This effectiveness is possible via modular components integrated after local message-passing layers.

\begin{figure*}[t!h]
  \begin{center}
    \centerline{\includegraphics[width=0.9\textwidth]{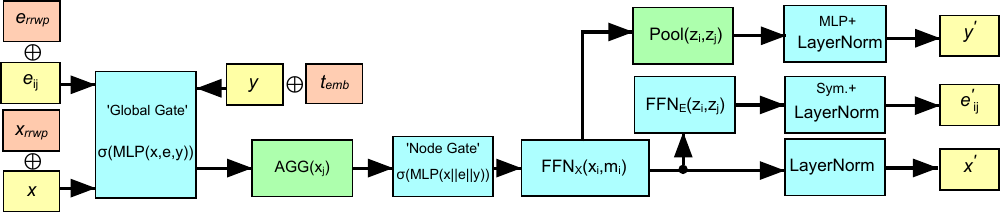}}
    \caption{
       The per-layer GenGNN framework, composed of modular components, in order: Node ($x$), Edge ($e_{ij}$), and Graph (y) Features w/ RRWP or time embeddings (shown in Orange). Residual channels flow between initial features and output features. \textit{Note: } Blue and orange modules are ablatable. Yellow and green modules are always enabled during experimentation. We denote the output graph as $y'$, $e'$, and $x'$.
    }
    \label{fig:gengnn_diagram}
  \end{center}
\end{figure*}

\subsection{Model Design}


GenGNN is designed as a coupled node--edge--global message-passing operator for graph generation. Rather than treating node updates, edge updates, global readouts, feed-forward transformations, residual connections, and normalization as independent modules, GenGNN composes them into a single denoising transition that evolves node states $\mathbf{X}^{(\ell)}$, edge states $\mathbf{E}^{(\ell)}$, and graph-level states $\mathbf{y}^{(\ell)}$ across layers. This is important in graph generation because denoising backbones simultaneously recover local node attributes, pairwise edge structure, and global graph conditioning. We therefore view each GenGNN layer as the transition
\begin{equation}
\left(
\mathbf{X}^{(\ell)},
\mathbf{E}^{(\ell)},
\mathbf{y}^{(\ell)}
\right)
\xrightarrow{\;\mathcal{f}_{\theta}^{(\ell)}\;}
\left(
\mathbf{X}^{(\ell+1)},
\mathbf{E}^{(\ell+1)},
\mathbf{y}^{(\ell+1)}
\right),
\label{eq:gengnn_transition}
\end{equation}
where $\mathcal{f}_{\theta}^{(\ell)}$ couples local message passing with edge-state persistence and graph-level feedback.

\paragraph{Input Feature Representations.} Following prior work~\cite{ma2023graph,qin2024defog}, GenGNN initializes edge states with relative random walk probabilities (RRWP).
Let $\mathbf{T}$ denote the random-walk transition matrix and let $K$ be the maximum walk length. We define
\begin{equation}
\mathbf{r}_{ij}
=
\left[
\mathbf{I},\,
\mathbf{T},\,
\mathbf{T}^{2},\,
\ldots,\,
\mathbf{T}^{K-1}
\right]_{ij},
\label{eq:rrwp}
\end{equation}
and initialize the edge representation as
\begin{equation}
\mathbf{e}_{ij}^{(0)}
=
\left[
\mathbf{e}_{ij}^{\mathrm{attr}}
\,\Vert\,
\mathbf{r}_{ij}
\right].
\label{eq:edge_init_rrwp}
\end{equation}
This provides pairwise structural context before denoising begins. Thus bypassing the first critical limitation of local message-passing. Subsequent GenGNN layers then determine how structure is preserved or transformed during generation.

\paragraph{Global-aware Gated Aggregation.} At layer $\ell$, GenGNN first computes an edge-conditioned gate for each valid node pair $(i,j)$. Let $\mathbf{B}_{ij}\in\{0,1\}$ denote the pair mask, where invalid nodes and self-pairs are excluded. The global gate is defined as
\begin{equation}
g_{ij}^{(\ell)}
=
\sigma
\left(
\mathbf{w}_{g}^{\top}
\operatorname{GELU}
\left(
\mathbf{W}_{i}\mathbf{x}_{i}^{(\ell)}
+
\mathbf{W}_{j}\mathbf{x}_{j}^{(\ell)}
+
\mathbf{W}_{e}\mathbf{e}_{ij}^{(\ell)}
+
\mathbf{W}_{y}\mathbf{y}^{(\ell)}
+
\mathbf{b}_{g}
\right)
\right),
\label{eq:edge_gate}
\end{equation}
and the effective pairwise message weight is
\begin{equation}
a_{ij}^{(\ell)}
=
\mathbf{B}_{ij} g_{ij}^{(\ell)}.
\label{eq:effective_edge_weight}
\end{equation}
Thus, message passing is not performed over a fixed adjacency alone. Instead, each node-to-node signal is modulated by a combination of: source state, target state, edge state, and graph-level state.

The node message is then computed by normalized aggregation~\cite{bresson2017residual}:
\begin{equation}
\mathbf{m}_{i}^{(\ell)}
=
\frac{
\sum_{j}
a_{ij}^{(\ell)}
\mathbf{W}_{m}\mathbf{x}_{j}^{(\ell)}
}{
\sum_{j} a_{ij}^{(\ell)} + \epsilon
}.
\label{eq:gated_message}
\end{equation}
\paragraph{Node-Gating.} When node gating is enabled, GenGNN further modulates this aggregated message with the receiving node, the local edge context, and the global
graph state:
\begin{equation}
\bar{\mathbf{e}}_{i}^{(\ell)}
=
\frac{
\sum_{j}
\mathbf{B}_{ij}\mathbf{e}_{ij}^{(\ell)}
}{
\sum_{j}\mathbf{B}_{ij}+\epsilon
},
\label{eq:local_edge_pool}
\end{equation}
\begin{equation}
s_{i}^{(\ell)}
=
\sigma
\left(
\operatorname{MLP}_{v}
\left(
\left[
\mathbf{x}_{i}^{(\ell)}
\,\Vert\,
\bar{\mathbf{e}}_{i}^{(\ell)}
\,\Vert\,
\mathbf{y}^{(\ell)}
\right]
\right)
\right),
\label{eq:node_gate}
\end{equation}
\begin{equation}
\tilde{\mathbf{m}}_{i}^{(\ell)}
=
s_{i}^{(\ell)}
\odot
\mathbf{m}_{i}^{(\ell)}.
\label{eq:node_gated_message}
\end{equation}
If node gating is disabled, we set $s_i^{(\ell)}=\mathbf{1}$.

\paragraph{State Modulation via Feed-Forward.} Let $\mathbf{W}_{s}$ represent the weighted matrix obtained via gating. The node state is updated by combining the gated message ($\tilde{\mathbf{m}}_{i}^{(\ell)}$) into a feed-forward transformation:
\begin{equation}
\Delta \mathbf{x}_{i}^{(\ell)}
=
\operatorname{FFN}_{X}
\left(
\mathbf{x}_{i}^{(\ell)}
+
\mathbf{W}_{s}\mathbf{x}_{i}^{(\ell)}
+
\tilde{\mathbf{m}}_{i}^{(\ell)}
\right),
\label{eq:node_delta}
\end{equation}
\begin{equation}
\mathbf{x}_{i}^{(\ell+1)}
=
\operatorname{LayerNorm}
\left(
\mathbf{x}_{i}^{(\ell)}
+
\Delta \mathbf{x}_{i}^{(\ell)}
\right).
\label{eq:node_update}
\end{equation}

After the node state is refreshed, GenGNN updates each edge using the new endpoint states and the previous edge state:
\begin{equation}
\Delta \mathbf{e}_{ij}^{(\ell)}
=
\operatorname{FFN}_{E}
\left(
\left[
\mathbf{x}_{i}^{(\ell+1)}
\,\Vert\,
\mathbf{x}_{j}^{(\ell+1)}
\,\Vert\,
\mathbf{e}_{ij}^{(\ell)}
\right]
\right).
\label{eq:edge_delta}
\end{equation}
To preserve the undirected edge representation used by the denoising backbone, the edge update is symmetrized:
\begin{equation}
\mathbf{e}_{ij}^{(\ell+1)}
=
\mathbf{B}_{ij}
\operatorname{LayerNorm}
\left(
\mathbf{e}_{ij}^{(\ell)}
+
\frac{
\Delta \mathbf{e}_{ij}^{(\ell)}
+
\Delta \mathbf{e}_{ji}^{(\ell)}
}{2}
\right).
\label{eq:edge_update}
\end{equation}

Finally, the global graph state is updated from pooled node and edge states:
\begin{equation}
\bar{\mathbf{x}}^{(\ell+1)}
=
\frac{1}{|V|}
\sum_{i\in V}
\mathbf{x}_{i}^{(\ell+1)},
\qquad
\bar{\mathbf{e}}^{(\ell+1)}
=
\frac{
\sum_{i,j}\mathbf{B}_{ij}\mathbf{e}_{ij}^{(\ell+1)}
}{
\sum_{i,j}\mathbf{B}_{ij}+\epsilon
}.
\label{eq:global_pool}
\end{equation}
The graph-level state then receives a residual update:
\begin{equation}
\mathbf{y}^{(\ell+1)}
=
\operatorname{LayerNorm}
\left(
\mathbf{y}^{(\ell)}
+
\mathbf{W}_{y}
\left[
\bar{\mathbf{x}}^{(\ell+1)}
\,\Vert\,
\bar{\mathbf{e}}^{(\ell+1)}
\right]
\right).
\label{eq:global_update}
\end{equation}

\paragraph{Message-Persistence Loop.} Equations~\eqref{eq:edge_gate}--\eqref{eq:global_update} define the core framework components. Importantly, the graph-level state is not used just for a final readout. It participates in global gating and node gating before message aggregation, and then updates the resulting node and edge states. This updates Eqn.~\eqref{eq:layer_loop} into a recurrent node--edge--global loop:
\begin{equation}
\mathbf{x}_{i}^{(\ell)}
\rightarrow
\mathbf{e}_{ij}^{(\ell)}
\rightarrow
\mathbf{y}^{(\ell)}
\rightarrow
g_{ij}^{(\ell)}, s_i^{(\ell)}
\rightarrow
\mathbf{x}_{i}^{(\ell+1)}
\rightarrow
\mathbf{e}_{ij}^{(\ell+1)}
\rightarrow
\mathbf{y}^{(\ell+1)}.
\label{eq:node_edge_global_loop}
\end{equation}
This coupling distinguishes GenGNN from standard message passing for denoising, as shown in Eqn.~\eqref{eq:layer_loop}. Instead, Eqn.~\eqref{eq:node_edge_global_loop} utilizes edge and global channels to actively shape local aggregation, while refreshing local states.

GenGNN is capable of instantiating standard aggregation operators, including GCN- or GIN-style node updates~\cite{kipf2016semi,xu2018powerful}. However, GenGNN's design is centered around testing whether competitive graph generation for message-passing is possible when node updates are embedded inside a coupled edge-persistent and graph-conditioned denoising loop.
This lets us test whether the generative gap between message passing and more expressive graph generation backbones is due to theoretical expressivity, or how node, edge, and graph-level signals are correlated during denoising.

\subsection{Effective Message-Passing Denoisers: A Theoretical Perspective}
\label{ssec:theory_residual_denoising}

Discrete graph generation requires a denoising backbone to repeatedly update noisy node, edge, and graph states across many reverse diffusion steps. Strong graph generation models often use deep denoisers (10$+$ layers) to capture long-range dependencies \cite{qin2024defog,vignac2022digress}, sparking concern around message-passing's tendency to oversmooth gradients as layer-depth increases~\cite{li2018deeper}. In standard prediction tasks, oversmoothing describes the collapse of node representations toward low-dimensional dominant eigenspace. This issue is more severe in graph generation: if a denoiser collapses during reverse diffusion, then the model may lose node-wise distinctions for recovering valid graph structure, leading to degenerate samples, as observed in Figure~\ref{fig:degen_planar}.

Graph generation's key distinction from standard GNN prediction is how diffusion denoisers do not process a single clean graph. Instead, the denoiser receives a sequence of corrupted graphs, $\{G_t\}_{t=1}^{T}$, where late noising steps often make categorical node states uninformative. Residual connections therefore serve as an \emph{identity anchor}: preserving timestep-invariant structural information, such as positional encodings, while the message-passing operator learns a timestep-dependent denoising correction.

\paragraph{Setup.}
We follow the generalized oversmoothing perspective of \citet{scholkemper2024residual}. Let $v\in\mathbb{R}^{n}$ be a unit vector and define the orthogonal projector $P_v := I-vv^\top$. For node representations $X\in\mathbb{R}^{n\times d}$, define the distance from collapse to $\mathrm{span}\{v\}$ as
\begin{equation}
\label{eq:main_mu_v}
    \mu_v(X) := \|P_v X\|_F^2 .
\end{equation}
A small $\mu_v(X)$ indicates node representations nearly align with a dominant one-dimensional subspace, while larger values indicate node-wise information remains dispersed. This measure is useful because several common oversmoothing quantities, including: distances to the dominant eigenspace, Dirichlet-energy-based measures, and constant-subspace distances, are equivalent up to constants under standard assumptions \cite{cai2020note,oono2019graph,rusch2023survey,wu2023demystifying,scholkemper2024residual}.

At reverse diffusion step $t$, let the denoiser input be
\begin{equation}
\label{eq:main_denoiser_input}
    X_{\mathrm{in}}^{(t)}
    =
    \big[\,X_t \;\; X_{\mathrm{ENC}}\,\big],
\end{equation}
where $X_t$ is the noisy node state and $X_{\mathrm{ENC}}$ denotes structural encodings supplied to the denoiser. We write one residual message-passing block abstractly as
\begin{equation}
\label{eq:main_residual_denoiser}
    X_{\mathrm{out}}^{(t)}
    =
    f_\theta\!\left(
        X_{\mathrm{in}}^{(t)}, E_{\mathrm{in}}^{(t)}, y_{\mathrm{in}}^{(t)}, \mathrm{mask}
    \right)
    +
    S_X X_{\mathrm{in}}^{(t)} ,
\end{equation}
where $G_\theta(\cdot)$ is the learned message-passing denoising update and $S_X X_{\mathrm{in}}^{(t)}$ is the residual identity channel, possibly followed by a learned linear projection or normalization. This decomposition separates the denoiser into a \emph{correction path}, $G_\theta(\cdot)$, and an \emph{identity path}, $S_X X_{\mathrm{in}}^{(t)}$. The identity path is useful only if it carries information that does not collapse under projection. We state this as a structural non-degeneracy condition.

\begin{table*}[h!]
\centering
\small
\setlength{\tabcolsep}{4.6pt}
\renewcommand{\arraystretch}{1.08}
\begin{tabular}{l|c|cc|cc|cc|c|c}
\cmidrule(lr){1-10}
& \multicolumn{1}{c|}{Comm20}
& \multicolumn{2}{c|}{Tree}
& \multicolumn{2}{c|}{Planar}
& \multicolumn{2}{c|}{SBM}
& \multicolumn{1}{c|}{QM9}
& \multicolumn{1}{c}{ZINC250K} \\
\cmidrule(lr){2-2} \cmidrule(lr){3-4} \cmidrule(lr){5-6} \cmidrule(lr){7-8} \cmidrule(lr){9-9} \cmidrule(lr){10-10}
& Avg. $\downarrow$
& Avg. $\downarrow$ & V.U.N. $\uparrow$
& Avg. $\downarrow$ & V.U.N. $\uparrow$
& Avg. $\downarrow$ & V.U.N. $\uparrow$
& Val. $\uparrow$ & Val. $\uparrow$  \\
\cmidrule(lr){1-10} 

\multicolumn{10}{l}{\textbf{DeFoG}} \\
\cmidrule(lr){1-10}
PPGN
& 2.2 $\pm$ 0.86
& 1.7 $\pm$ 0.44 & \textbf{93 $\pm$ 4.6}
& 2.3 $\pm$ 0.63 & 93 $\pm$ 3.4
& OOM & OOM
& 98.69  $\pm$  0.86
& \textbf{95.59 $\pm$  0.02} \\

GT
& 1.9 $\pm$ 0.66
& 2.6 $\pm$ 0.8 & 91 $\pm$ 2.0
& 1.9 $\pm$ 0.64 & 93 $\pm$ 3.4
& 2.7 $\pm$ 0.57 & 72 $\pm$ 2.9
& 99.25 $\pm$ 0.96
& 94.25 $\pm$  0.4 \\

GenGNN
& \textbf{1.8 $\pm$ 0.61}
& \textbf{1.4 $\pm$ 0.29} & 91 $\pm$ 2.6
& \textbf{1.5 $\pm$ 0.24} & 93 $\pm$ 3.9
& \textbf{1.9 $\pm$ 0.37} & 73 $\pm$ 4.1
& \textbf{99.26 $\pm$ 0.0}
& 95.55 $\pm$ 0.79 \\



\cmidrule(lr){1-10}
\textit{$\Delta$\% PPGN}
& \colorup{+19\%} & \colorup{+18\%} & \colordown{-2\%} & \colorup{+35\%} & 0\% & -- & --
& \colorup{+0.6\%} & 0\% \\
\cmidrule(lr){1-10}
\textit{$\Delta$\% GT}
& \colorup{+5\%} & \colorup{+86\%} & 0\% & \colorup{+21\%} & 0\% & \colorup{+29\%} & \colorup{+1\%}
& 0\% & \colorup{+1.4\%} \\

\cmidrule(lr){1-10}
\multicolumn{10}{l}{\textbf{DiGress}} \\
\cmidrule(lr){1-10}
PPGN
& 1.9 $\pm$ 0.77
& 1.9 $\pm$ 0.9 & \textbf{92 $\pm$ 2.4}
& 1.9 $\pm$ 0.63 & \textbf{89 $\pm$ 4.7}
& OOM & OOM
& 99.23 ± 0.07
& OOM \\

GT
& 1.9 $\pm$ 0.62
& 1.8 $\pm$ 0.92 & 90 $\pm$ 2.9
& \textbf{1.2 $\pm$ 0.25} & 85 $\pm$ 5.3
& 1.9 $\pm$ 0.37 & 75 $\pm$ 4.9
& 99.36 $\pm$ 1.2
& 94.19 $\pm$ 1.81 \\

GenGNN
& \textbf{1.7 $\pm$ 0.13}
& \textbf{1.4 $\pm$ 0.34} & 91 $\pm$ 3.5
& 1.5 $\pm$ 0.5 & 85 $\pm$ 7.5
& 1.9 $\pm$ 0.7 & \textbf{77.2 $\pm$ 3.7}
& \textbf{99.49 $\pm$ 0.36}
& \textbf{96.24 $\pm$ 0.15} \\



\cmidrule(lr){1-10}
\textit{$\Delta$\% PPGN}
& \colorup{+10\%} & \colorup{+23\%} & \colordown{-1\%} & \colorup{+22\%} & \colordown{-4\%}  & -- & --
& \colorup{+0.2\%} & -- \\
\cmidrule(lr){1-10}
\textit{$\Delta$\% GT}
& \colorup{+10\%} & \colorup{+27\%} & \colorup{+1\%} & \colorup{+25\%} & \colorup{+0.1\%} & 0\% & \colorup{+2\%}
& 0\% & \colorup{+2\%} \\

\cmidrule(lr){1-10}
\end{tabular}
\caption{DeFoG and DiGress results with GenGNN, PPGN, and GT bacbones on synthetic datasets (Comm20, Tree, Planar, SBM) and molecular datasets (QM9, ZINC250K).
Lower is better for Avg Ratio; Higher is better for V.U.N. and Validity (Val.). For readability, we select the best-performing GenGNN baseline.
Results are mean $\pm$ std.\ across five sample runs. We color improvements in performance \colorup{blue} and decreases \colordown{red}.}
\label{tab:synth_mole_results}
\end{table*}

\begin{assumption}\label{ass:main_residual_nondeg}
There exists $\gamma>0$ such that, for each unit vector $v$ associated with a potential oversmoothing direction,
\begin{equation}
\label{eq:main_residual_gamma}
    \mu_v\!\left(S_X[0\;\;X_{\mathrm{ENC}}]\right)
    =
    \left\|P_v S_X[0\;\;X_{\mathrm{ENC}}]\right\|_F^2
    \ge \gamma .
\end{equation}
\end{assumption}

This condition states that the residual channel preserves nontrivial node-wise variation from the structural encoding alone, even when the noisy categorical channel $X_t$ is removed.
Thus, when the forward process drives $X_t$ toward an uninformative marginal, the denoiser still receives an anchored structural signal through the residual path. 
This assumption is consistent with the role of residual connections in deep architectures \cite{he2016deep,peebles2023scalable} and with recent theory showing that residual connections and normalization can prevent complete GNN oversmoothing \cite{scholkemper2024residual}. We additionally assume that the learned correction path has bounded orthogonal energy.

\begin{assumption}\label{ass:main_backbone_bound}
There exists $C<\infty$ such that, for every reverse diffusion step $t$,
\begin{equation}
\label{eq:main_backbone_bound}
    \left\|
    P_v G_\theta\!\left(
        X_{\mathrm{in}}^{(t)}, E_{\mathrm{in}}^{(t)}, y_{\mathrm{in}}^{(t)}, \mathrm{mask}
    \right)
    \right\|_F^2
    \le C .
\end{equation}
\end{assumption}

Assumption~\ref{ass:main_backbone_bound} does not require the message-passing update to be small; it only requires the orthogonal component of a learned correction remains bounded. In practice, this aligns with the intuition that denoising backbones should learn controlled corrections between consecutive reverse steps rather than overwrite node-wise information at every layer. We treat this as Theorem~\ref{thm:main_uniform_noncollapse}.

\begin{theorem}
\label{thm:main_uniform_noncollapse}
Suppose the denoising block satisfies Eq.~\eqref{eq:main_residual_denoiser}
and Assumptions~\ref{ass:main_residual_nondeg}--\ref{ass:main_backbone_bound}.
If the residual path preserves the structural anchor at each timestep, i.e.,
\begin{equation}
\label{eq:main_timestep_anchor}
    \left\|P_vS_XX_{\mathrm{in}}^{(t)}\right\|_F^2 \ge \gamma
    \qquad \text{for all } t ,
\end{equation}
then, for every reverse diffusion step $t$,
\begin{equation}
\label{eq:main_final_uniform_noncollapse}
    \boxed{
    \mu_v\!\left(X_{\mathrm{out}}^{(t)}\right)
    \ge
    \frac{1}{2}\gamma - C
    } .
\end{equation}
In particular, if $\gamma>2C$, then
$\mu_v(X_{\mathrm{out}}^{(t)})>0$ for all $t$, and the denoiser output cannot
collapse completely to the oversmoothed subspace associated with $v$ at any
reverse diffusion step.
\end{theorem}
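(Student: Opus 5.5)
The proof is a reverse triangle inequality for the Frobenius norm after applying the linear projector $P_v$. Since the block in Eq.~\eqref{eq:main_residual_denoiser} has the form $X_{\mathrm{out}}^{(t)} = G_\theta(\cdot) + S_X X_{\mathrm{in}}^{(t)}$, and $f_\theta$ and $G_\theta$ denote the same correction path, linearity of $P_v$ gives
\[
P_v X_{\mathrm{out}}^{(t)} = P_v S_X X_{\mathrm{in}}^{(t)} + P_v G_\theta\bigl(X_{\mathrm{in}}^{(t)}, E_{\mathrm{in}}^{(t)}, y_{\mathrm{in}}^{(t)}, \mathrm{mask}\bigr).
\]
Write $a := P_v S_X X_{\mathrm{in}}^{(t)}$ and $b := P_v G_\theta(\cdot)$, so that $\mu_v(X_{\mathrm{out}}^{(t)}) = \|a+b\|_F^2$.

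The key step is the elementary inequality $\|a+b\|_F^2 \ge \tfrac12\|a\|_F^2 - \|b\|_F^2$. It follows from the parallelogram-type bound $\|a\|_F^2 = \|(a+b) - b\|_F^2 \le 2\|a+b\|_F^2 + 2\|b\|_F^2$, which is Young's inequality $2\langle u,w\rangle \le \|u\|^2+\|w\|^2$ applied to the Frobenius inner product. Rearranging gives the claim.

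Then we substitute the two hypotheses. The timestep anchor condition Eq.~\eqref{eq:main_timestep_anchor} gives $\|a\|_F^2 \ge \gamma$. Assumption~\ref{ass:main_backbone_bound} gives $\|b\|_F^2 \le C$. Together these yield $\mu_v(X_{\mathrm{out}}^{(t)}) \ge \tfrac12\gamma - C$ for every $t$, uniformly in $t$ because both bounds are. If $\gamma > 2C$ the right-hand side is strictly positive, so $P_v X_{\mathrm{out}}^{(t)} \neq 0$: the output is not contained in $\mathrm{span}\{v\}$ (column-wise) and no complete collapse occurs.

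Only one point needs care. Assumption~\ref{ass:main_residual_nondeg} concerns $S_X[0\;\;X_{\mathrm{ENC}}]$ rather than $S_X X_{\mathrm{in}}^{(t)}$, which also carries the noisy channel $X_t$. The theorem sidesteps this by assuming Eq.~\eqref{eq:main_timestep_anchor} directly, so the proof uses that hypothesis and does not derive it from Assumption~\ref{ass:main_residual_nondeg}. Deriving it would require a further condition, such as orthogonality of the contributions of $X_t$ and $X_{\mathrm{ENC}}$ under $P_v S_X$ (for example, a block-diagonal $S_X$). We should also note that if normalization follows the residual sum, $f_\theta$ must absorb it so that Eq.~\eqref{eq:main_residual_denoiser} holds exactly.
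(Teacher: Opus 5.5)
Your proposal is correct and follows the paper's proof essentially step for step. The paper likewise splits $P_vX_{\mathrm{out}}^{(t)}$ into the projected correction and identity paths, and uses $\|b_t\|_F \le \|a_t+b_t\|_F + \|a_t\|_F$ with $(r+s)^2\le 2r^2+2s^2$ (equivalent to your Young/parallelogram bound) to get $\mu_v(X_{\mathrm{out}}^{(t)}) \ge \tfrac12\|P_vS_XX_{\mathrm{in}}^{(t)}\|_F^2 - \|P_vG_\theta(\cdot)\|_F^2$, then substitutes the anchor condition and Assumption~\ref{ass:main_backbone_bound}. Your caveat also matches the paper: its proof never invokes Assumption~\ref{ass:main_residual_nondeg} and relies instead on the directly assumed timestep anchor condition.
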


\begin{proof}
Fix a reverse diffusion step $t$ and a potential oversmoothing direction $v$.
Write the residual denoising block in Eq.~\eqref{eq:main_residual_denoiser} as
\[
\label{eq:main_residual_A_B}
    X_{\mathrm{out}}^{(t)} = A_t + B_t ,
\]
where
\[
\label{eq:main_A_definition}
    A_t =
    G_\theta\!\left(
        X_{\mathrm{in}}^{(t)},
        E_{\mathrm{in}}^{(t)},
        y_{\mathrm{in}}^{(t)},
        \mathrm{mask}
    \right)
\]
is the learned correction path and
\[
\label{eq:main_B_definition}
    B_t = S_X X_{\mathrm{in}}^{(t)}
\]
is the residual identity path. Applying the projector $P_v$ gives
\[
\label{eq:main_projected_decomposition}
    P_vX_{\mathrm{out}}^{(t)} = P_vA_t + P_vB_t .
\]
Let $a_t=P_vA_t$ and $b_t=P_vB_t$. By the triangle inequality,
\[
\label{eq:main_triangle_step}
    \|b_t\|_F
    =
    \|(a_t+b_t)-a_t\|_F
    \le
    \|a_t+b_t\|_F+\|a_t\|_F .
\]
Using $(r+s)^2\le 2r^2+2s^2$ for $r,s\ge0$ gives
\[
\label{eq:main_scalar_step}
    \|b_t\|_F^2
    \le
    2\|a_t+b_t\|_F^2
    +
    2\|a_t\|_F^2 .
\]
Rearranging yields
\[
\label{eq:main_inline_two_term_bound}
    \|P_vX_{\mathrm{out}}^{(t)}\|_F^2
    =
    \|a_t+b_t\|_F^2
    \ge
    \frac{1}{2}\|b_t\|_F^2
    -
    \|a_t\|_F^2 .
\]
Substituting the definitions of $a_t$ and $b_t$ gives
\begin{align}
\label{eq:main_explicit_residual_lower_bound}
    \mu_v\!\left(X_{\mathrm{out}}^{(t)}\right)
    &=
    \left\|P_vX_{\mathrm{out}}^{(t)}\right\|_F^2
    \nonumber\\
    &\ge
    \frac{1}{2}
    \left\|P_vS_XX_{\mathrm{in}}^{(t)}\right\|_F^2
    -
    \left\|
    P_vG_\theta\!\left(
        X_{\mathrm{in}}^{(t)},
        E_{\mathrm{in}}^{(t)},
        y_{\mathrm{in}}^{(t)},
        \mathrm{mask}
    \right)
    \right\|_F^2 .
\end{align}
By the timestep anchor condition in Eq.~\eqref{eq:main_timestep_anchor},
the first term is lower bounded by $\gamma/2$. By
Assumption~\ref{ass:main_backbone_bound}, the second term is upper bounded
by $C$. Therefore,
\begin{equation}
\label{eq:main_final_uniform_noncollapse}
    \mu_v\!\left(X_{\mathrm{out}}^{(t)}\right)
    \ge
    \frac{1}{2}\gamma - C
\end{equation}
Since $t$ is arbitrary, the bound holds uniformly across all reverse diffusion steps.
When $\gamma>2C$, the right-hand side is positive, so
$\mu_v(X_{\mathrm{out}}^{(t)})>0$ for all $t$.
\end{proof}

{\bf Takeaway}: Theorem~\ref{thm:main_uniform_noncollapse} explains why residual connections are more than an optimization convenience for message-passing graph denoisers. During late forward noising, $X_t$ may approach a stationary categorical marginal and become weakly informative. A purely local message-passing denoiser must then infer structure primarily from noisy node and edge states, making it vulnerable to oversmoothing. In contrast, a residual-anchored denoiser preserves a timestep-invariant structural signal through $S_XX_{\mathrm{in}}^{(t)}$, while the learned message-passing path contributes a bounded denoising correction. Thus, the identity path grounds informative node-wise variation across reverse diffusion steps.

\begin{corollary}
\label{cor:main_anchored_robustness}
If the forward process drives $X_t$ toward an uninformative stationary marginal as $t\rightarrow T$, but the residual path satisfies Eq.~\eqref{eq:main_timestep_anchor}, then the reverse denoiser retains a non-vanishing component orthogonal to the oversmoothing direction at every timestep whenever $\gamma>2C$.
\end{corollary}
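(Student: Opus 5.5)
The corollary follows by applying Theorem~\ref{thm:main_uniform_noncollapse} pointwise in $t$. The key observation is that the theorem's lower bound $\tfrac12\gamma - C$ does not depend on $X_t$ at all. It uses only the anchor condition Eq.~\eqref{eq:main_timestep_anchor} and the uniform bound $C$ from Assumption~\ref{ass:main_backbone_bound}. So the hypothesis that $X_t$ tends to an uninformative stationary marginal as $t\to T$ never has to be fed into the estimate. Its role is interpretive: it tells us that any positive projected energy must come from the structural channel $X_{\mathrm{ENC}}$, and not from the noisy categorical state.

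The steps are as follows. First, fix an arbitrary reverse step $t$ (including $t$ arbitrarily close to $T$) and an oversmoothing direction $v$. Second, check that the hypotheses of Theorem~\ref{thm:main_uniform_noncollapse} hold at this $t$:
\begin{itemize}
\item the block has the residual form Eq.~\eqref{eq:main_residual_denoiser} by construction;
\item Assumption~\ref{ass:main_backbone_bound} holds for every $t$ with the same $C$;
\item the anchor Eq.~\eqref{eq:main_timestep_anchor} is assumed for all $t$.
\end{itemize}
Third, invoke the theorem to obtain $\mu_v(X_{\mathrm{out}}^{(t)}) \ge \tfrac12\gamma - C =: \delta$. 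Since $\gamma > 2C$, we have $\delta > 0$, and $\delta$ is independent of $t$. Hence $\inf_t \|P_v X_{\mathrm{out}}^{(t)}\|_F^2 \ge \delta > 0$, which is precisely a non-vanishing orthogonal component. This holds uniformly, including in the limit $t \to T$.

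The only delicate point is to justify that the anchor condition is compatible with the degeneration of $X_t$. I would argue this via Assumption~\ref{ass:main_residual_nondeg}. Because $S_X$ is linear, we can write $S_X X_{\mathrm{in}}^{(t)} = S_X[X_t\;\;0] + S_X[0\;\;X_{\mathrm{ENC}}]$. In the limit, $X_t$ approaches a stationary marginal, which is the same row for every node, i.e.\ $\mathbf{1}\pi^\top$. When $v \propto \mathbf{1}$, the first term lies in $\mathrm{span}\{v\}$ along the node axis, so $P_v$ annihilates it. What remains is $P_v S_X[0\;\;X_{\mathrm{ENC}}]$, whose squared norm is at least $\gamma$ by Assumption~\ref{ass:main_residual_nondeg}. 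So the anchor condition holds in the limit. For a general direction $v$ the cross term need not vanish, and I would handle that case by simply appealing to the assumed Eq.~\eqref{eq:main_timestep_anchor}. I expect this compatibility remark to be the only nontrivial part of the argument; the rest is a direct instantiation of the theorem.
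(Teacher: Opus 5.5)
Your argument is correct and matches the paper's route. The corollary is just the ``in particular'' clause of Theorem~\ref{thm:main_uniform_noncollapse}: its bound $\mu_v(X_{\mathrm{out}}^{(t)}) \ge \tfrac12\gamma - C$ uses only the anchor condition Eq.~\eqref{eq:main_timestep_anchor} and the uniform constant $C$, so it is independent of $t$ and $X_t$ and strictly positive for every $t$ when $\gamma>2C$. Your closing compatibility remark is unnecessary, since the anchor is already a hypothesis of the corollary, and as written it relies on assumptions the paper does not grant: linearity of $S_X$ (the paper allows normalization), $S_X$ acting only on the feature axis, and the sampled one-hot $X_t$ itself converging to $\mathbf{1}\pi^\top$ (only its law converges to the stationary marginal). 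It is best dropped or labeled as heuristic.
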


{\bf Takeaway}: Corollary~\ref{cor:main_anchored_robustness} connects current understanding on GNN oversmoothing to generative denoising~\cite{li2018deeper,cai2020note,oono2019graph,rusch2023survey,wu2023demystifying, scholkemper2024residual}. Residual channels preserve stable structural information, enabling the message-passing loop to learn node-, edge-, and graph-wise corrections for denoising input samples. All while GenGNN solely relies on local message-passing for model optimization. This provides a theoretical explanation for the empirical behavior in Section~\ref{ssec:metric_scaling}: residuals keep message passing effective at layer depths exceeding 12.

%% file: sections/results.tex
\section{Experiments}

We empirically demonstrate the validity of our anchored-denoiser claim through strong performance of GenGNN as a new backbone for DiGress and DeFoG across standard graph generation benchmark datasets. We conduct our experimentation in multiple phases:

\subsection{Results}

\begin{figure}[h!]
  
  \begin{center}
    \centerline{
    \includegraphics[width=0.9\columnwidth]{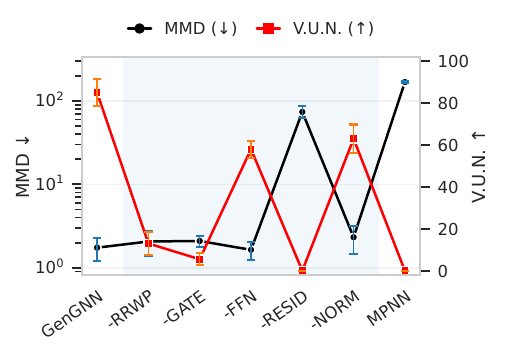}
    }
    \vskip -2em
    \caption{
      The change in MMD and V.U.N. across individual ablated components of the GenGNN framework (log-scaled), with a simple GNN backbone (on right). 
      } 
      \label{fig:tree_ablate_diag}
   \end{center}
    
\end{figure}

\paragraph{Experimental Settings} We test GenGNN's ability to generate diverse graph structural and molecular  characteristics with modular variants of GNN~\cite{gilmer2017neural}, GCN~\cite{kipf2016semi}, GINe~\cite{xu2018powerful} versus the LocalPPGN~\cite{bergmeister2023efficient} and GT~\cite{vignac2022digress} backbones across the: Comm20, Planar, SBM~\cite{martinkus2022spectre}, Tree~\cite{bergmeister2023efficient}, QM9~\cite{wu2018moleculenet}, and ZINC250K~\cite{sterling2015zinc} datasets. Model checkpoints on synthetic datasets were selected based on best-validation MMD, whereas molecular datasets were selected by best Validity. We apply the same datasets and testing standards as~\cite{qin2024defog}. To set a baseline, all model hyperparameters were tuned to identical settings. 

\paragraph{Structural and Molecular Generation}
As shown in Table~\ref{tab:synth_mole_results}, GenGNN variants consistently improve or perform on-par with PPGN and GT across average MMD ratio, with relative gains for Tree, Planar, and SBM reaching 86\%, 35\%, and 29\% respectively. Notably, GenGNN reaches structural validity (V.U.N.) levels on-par with PPGN and GT, reaching 93\% V.U.N. on Planar when applied within DeFoG. This is further demonstrated by the remaining results which are \textit{always within margins of GT and PPGN, both of which constitute more powerful architectures}.

\paragraph{Large Molecule and Conditional Generation} We verify GenGNN's effectiveness on the practical applications of larger-scale molecular design and conditional pathological diagnosis. GenGNN is benchmarked versus the original GT backbone on the: MOSES~\cite{polykovskiy2020molecular}, Guacamol~\cite{brown2019guacamol}, QM9(H)~\cite{wu2018moleculenet}, and Tertiary Lympoid Structure (TLS)~\cite{madeira2024generative} datasets. GNN and GT were grid-search tuned across identical hyperparameter configurations and selected for the best-validation Validity score. 

GenGNN's use of pooling allows conditional generation under classifier-free guidance~\cite{ho2022classifier}. TLS results within Table~\ref{tab:defog_molecular} demonstrate GenGNN's ability to handle conditional generation. As shown with a V.U.N score of 93.5\%, GenGNN's local message-passing disambiguates high TLS content labels from low TLS content labels, all while generating valid planar structure. Furthermore, GenGNN achieves 91.9\% Validity on MOSES and GuacaMol. This is notable given that GenGNN only requires 72 hours to train on MOSES or GuacaMol. Our tests with the original GT backbone required nearly a week to fully-converge on either dataset~\cite{qin2024defog}. This training speed-up signals GenGNN as a practical alternative with fast training times to reach reliable performance. 

\begin{table}[h!]
\small
\setlength{\tabcolsep}{4.5pt}

\begin{tabular}{c|cc}
\toprule
\multirow{2}{*}{\textbf{Ablations}} &
\multicolumn{2}{c}{\textbf{GenGNN}} \\
\cmidrule(lr){2-3}
& P(ERank) & Val. Ratio \\
\midrule
\multicolumn{2}{c}{\textbf{QM9}} \\
\midrule
-RRWP  & 0.37674 & 1.025 \\
-RRWP,NG,GG & -0.34339 & 1.0431\\
-RRWP,NG,Res.  & 0.99974 & 0.03889  \\
\bottomrule
\end{tabular}
\caption{Pearson correlation between QM9 Validity and ERank across layers depths: 1-24, with associated Validity Ratio. All metrics are calculated for \textbf{GenGNN} within DeFoG. NG = Node Gating, GG = Global Gating, RRWP = Relative-Random Walk Positional Encodings, Res. = Residual Connections.}
\label{tab:oversmoothing_correlation}
\end{table}

\begin{table*}[h]
\centering
\small
\setlength{\tabcolsep}{5.5pt}
\renewcommand{\arraystretch}{1.1}
\begin{tabular}{l|c|ccc|ccc|c}
\toprule
\multirow{2}{*}{\textbf{Backbone}} 
& \multicolumn{1}{c|}{TLS} 
& \multicolumn{3}{c|}{MOSES} 
& \multicolumn{3}{c|}{GuacaMol}
& \multicolumn{1}{c}{QM9(H)}  \\
\cmidrule(lr){2-2} 
\cmidrule(lr){3-5}
\cmidrule(lr){6-8}
\cmidrule(lr){9-9}
& V.U.N $\uparrow$  
& Validity $\uparrow$ 
& Uniqueness $\uparrow$ 
& Novelty $\uparrow$
& Validity $\uparrow$ 
& Uniqueness $\uparrow$ 
& Novelty $\uparrow$ 
& Validity $\uparrow$ \\
\midrule
GT 
& $93.75 \pm 2.02$ 
& $91.82 \pm 0.7$ 
& $99.53 \pm 0.35$ 
& $91.95 \pm 2.38$ 
& $91.82 \pm 0.45$ 
& $99.91 \pm 0.02$ 
& $98.89 \pm 0.01$ 
& $96.8 \pm 0.001$\\
GenGNN 
& $93.61 \pm 3.53$  
& $91.94 \pm 0.52$ 
& $99.79 \pm 0.19$ 
& $92.06 \pm 2.94$ 
& $92.66 \pm 0.52$ 
& $99.92 \pm 0.02$ 
& $99.01 \pm 0.03$ 
& $96.8 \pm 0.002$\\
\midrule 
$\Delta$\% & \colordown{-0.1\%} & \colorup{+0.1\%} & \colorup{+0.2\%} & \colorup{+0.1\%} & \colorup{+0.8\%} & 0\% & 0\% & 0\% \\
\bottomrule
\end{tabular}
\caption{DeFoG results for conditional generation with TLS and large-scale molecular generation results for MOSES, GuacaMol, and QM9(H) benchmarks. All results are mean $\pm$ std.\ across five sample runs. We color improvements in performance \colorup{blue} and decreases \colordown{red}.}
\label{tab:defog_molecular}
\end{table*}

\begin{figure*}[h!]
  \begin{center}
    \centerline{\includegraphics[width=0.98\textwidth]{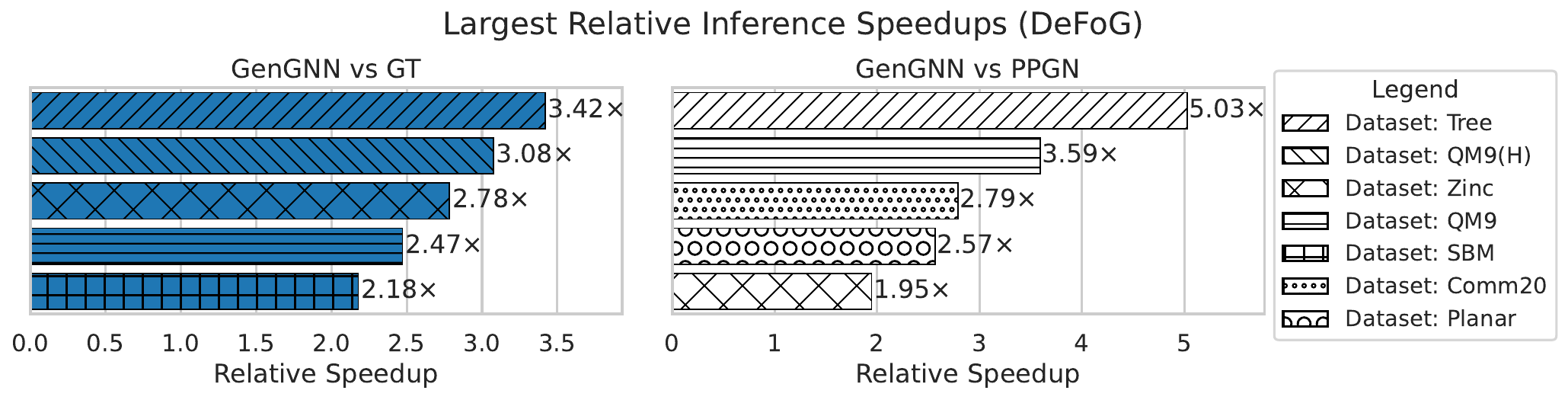}}
    \caption{
      The top-5 relative inference speedups for GenGNN vs. PPGN and GT denoising backbones across permutations of tested datasets. Individual colors represent GT (blue) and PPGN (white), hatching corresponds to a given dataset. 
    }
    \label{fig:inf_time}
  \end{center}
\end{figure*}

\subsection{Ablation Study}

To isolate the importance of each GenGNN component, we ablate individual parts of the GenGNN architecture. Figure~\ref{fig:tree_ablate_diag} illustrates results for the Tree dataset, where alternating plots for MMD and V.U.N indicate the severity of impact on downstream generation induced by removing a single component. 



Our analysis of Figure~\ref{fig:tree_ablate_diag} draws three separate conclusions. {\bf (1)} The most drastic change occurs to V.U.N, where removing any GenGNN component causes an $~$20\% decrease; sometimes dropping below 5\% overall. {\bf (2)} A near-inverse correlation occurs between MMD and V.U.N, as shown when GenGNN drops to 0\% V.U.N and 77 MMD after ablating per-layer residuals. This is the most severe decrease, demonstrating a need for residual in message-passing denoisers and providing empirical support for Theorem~\ref{thm:main_uniform_noncollapse}. {\bf (3)} The removal of RRWP and Gating cause V.U.N to drop near 10\%. However MMD ratios remains within margins of the fully-enabled GenGNN. This aligns with how powerful expressivity is required to capture difficult substructures, such as branch and leaf-nodes in tree graphs~\cite{xu2018powerful, feng2022powerful}.

\subsection{Oversmoothing Analysis}

\citet{zhang2025rethinking} determined that ERank~\cite{roy2007effective} is sensitive to node-signal oversmoothing within message-passing layers. Therefore, measuring correlation between QM9 Validity and ERank across deeper GenGNN layers makes it possible to further validate Theorem~\ref{thm:main_uniform_noncollapse}.

Results from Table~\ref{tab:oversmoothing_correlation} are calculated across layer depths $1-24$, increasing by increments of $4$. There are two strong results. The first occurs when per-layer residual connections are ablated within GenGNN, leading to near-perfect correlation (\textasciitilde$0.99$) between degraded performance and node-embedding rank, demonstrating clear oversmoothing. The second indicates \textit{residuals make GenGNN resilient to oversmoothing}. This is measured by negative/weak rank correlation and validity ratio increasing with more layers (>$1.0$), \textit{indicating measurable performance improvement for a deeper GenGNN}.

\begin{figure*}[h!]
  \begin{center}
    \centerline{\includegraphics[width=0.95\textwidth]{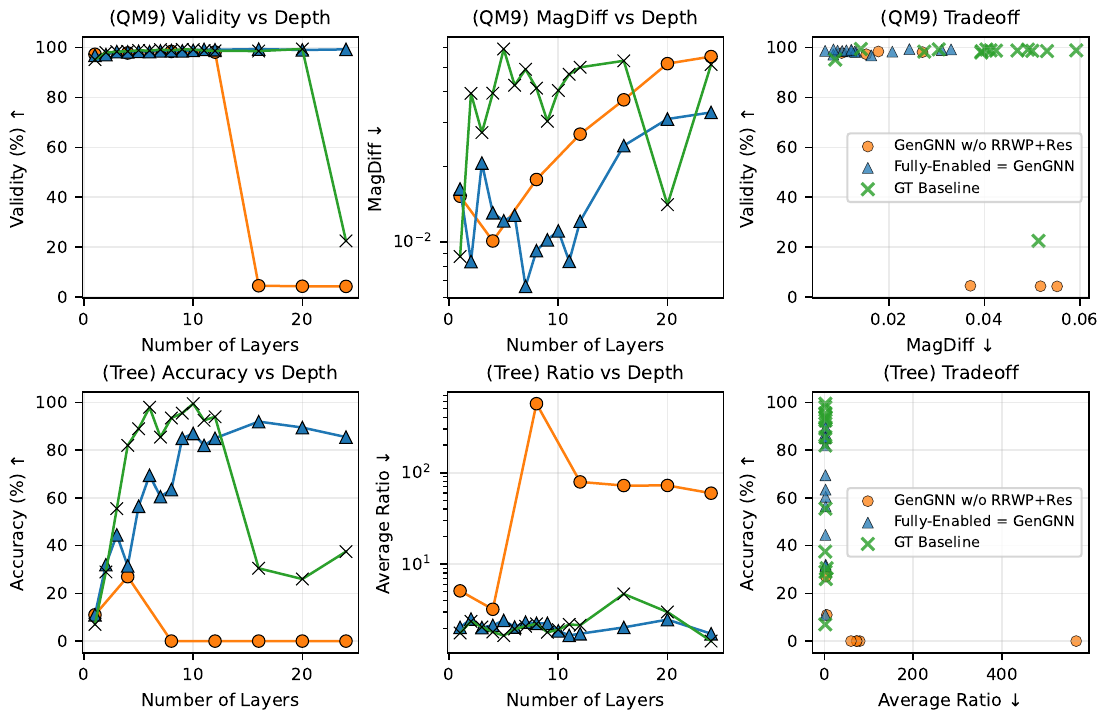}}
    \caption{
      The change in Validity (top-left), Accuracy (bottom-left),  MagDiff (top-center), and Accuracy (bottom-center) from layer depths 1 to 24 for the fully-enabled GenGNN and GT frameworks vs. GenGNN with RRWP and residual-normalization ablated, averaged over five runs. (top-right) The trade-off between MagDiff and Validity for the QM9 Dataset. (bottom-right) The trade-off between Average MMD Ratio and Accuracy for the Tree Dataset.
    }
    \label{fig:qm9_ablate}
  \end{center}
\end{figure*}

\subsection{Inference Time Comparison}

Figure~\ref{fig:inf_time} shows the difference in inference time for GenGNN relative to PPGN and GT across 6 datasets. This is calculated by the comparative decrease in number of seconds to complete sampling of five folds between the two encoders. {\bf We observe that GenGNN can achieve a speedup of 2.5-5x}. Furthermore, given that GenGNN often achieves comparable or better performance than existing encoders, the efficiency boost does not come at the expense of downstream performance.

\subsection{Metric-Space Scaling Investigation} \label{ssec:metric_scaling}

In order to isolate whether GenGNN learns latent representations which effectively capture the necessary modes for proper graph generation, we integrate a GINe-variant of GenGNN into the randomly-initialized feature extractor framework introduced by~\cite{thompson2022evaluation}. The feature extractor is given reference and predicted graphs as input. From which, the extractor produces embeddings to evaluate latent diversity of learned model representations. The MagDiff metric, as introduced in Equation~\ref{eq:magdiff}, then measures difference between both embeddings. MagDiff provides the practical benefit of near-perfect correlation for identifying structural diversity, even while measuring highly-noisy input samples~\cite{limbeck2024metric}. 

In Figure~\ref{fig:qm9_ablate}, both rows of left and center subplots gradually increase the number of layers for: 1) GT, 2) a fully-enabled GenGNN baseline, 3) and GenGNN with RRWP and Residuals ablated. The top-right subplot indicates the tradeoff between MagDiff and Validity for each diffusion backbone. The bottom-right subplot is restricted to Average MMD Ratio and Accuracy. This is due to the Tree dataset containing non-unique node embeddings, a restriction from a fixed node size of 64. 

The scaling analysis is conducted in this way to determine is there is visible correlation for latent diversity of generated samples (top-right subplot) to Validity and Accuracy. Both left-most subplots indicate GenGNN is resilient to oversmoothing. As demonstrated in two ways: 1) GenGNN's gradual performance increase with the number of layers, 2) the stability of GenGNN's performance at higher layer counts relative to GT or when residuals are ablated. The bottom center and rightmost subplots demonstrate GenGNN consistently performs on-par with GT on the Tree dataset, regardless of layer depth. The top center and rightmost subplots demonstrate GenGNN often clusters around MagDiff values lower than GT, indicating local message-passing can capture more latent diversity in graph substructure for the QM9 dataset. Given these clear trends, GenGNN overcomes oversmoothing in a way where it's \textit{learned representation translates directly to improved downstream performance}; just as might be expected from a GT backbone. This indicates clear practical benefit for GenGNN as a framework to uplift local message-passing for graph generation.



%% file: sections/conclusion.tex
\section{Conclusion}
This paper takes aim at graph transformer backbones typically used in discrete graph generation. In practice, existing graph generative models tend to use powerful encoders, which come with the downside of being far less efficient than vanilla GNNs. We propose GenGNN, a simple and modular GNN framework incorporating best practice to demonstrate comparable performance to expressive systems with improved efficiency. Across a variety of real-world benchmarks, GenGNN achieves similar if not better performance relative to GT or PPGN with 2-5x inference speedup. Further analysis shows GenGNN is capable of capturing substructures not previously possible with vanilla message-passing. We believe this opens the opportunity for faster and more performant graph generation models. Future work could center around applications of local message-passing in graph generation, such as implementations for differential privacy or algorithmic reasoning.

